\theoremstyle{plain}
\newtheorem{theorem}{Theorem} 
\newtheorem{lemma}{Lemma}
\theoremstyle{definition}
\newtheorem{remark}[theorem]{Remark} 
\newtheorem{hyp}[theorem]{Assumption}
\theoremstyle{remark}
\numberwithin{equation}{section}
\numberwithin{figure}{section}
\begin{document}

\title{Convergence of backpropagation with momentum for network architectures with skip connections}

\author{Chirag Agarwal\textsuperscript{1}, Joe Klobusicky\textsuperscript{2}, Don Schonfeld\textsuperscript{1}}

\date{}
\maketitle

\begin{abstract}
We study a class of deep neural networks with architectures that form a directed
acyclic graph (DAG).   For backpropagation defined by gradient descent with
adaptive momentum, we show weights converge for a large class of nonlinear
activation functions.  The proof generalizes the results of Wu et al. (2008)
who showed convergence for a feed-forward network with one hidden layer.
For an example of the effectiveness of DAG architectures, we describe an
example of compression through an AutoEncoder, and compare against sequential
feed-forward networks under several metrics.
\end{abstract}

\smallskip
\noindent
{\bf MSC classification:} 68M07, 68T01

\smallskip
\noindent
{\bf Keywords:} backpropagation with momentum; autoencoders; directed acyclic graphs

\medskip
\noindent

\footnotetext[1]
{Department of Electrical and Computer Engineering, University of
Illinois at Chicago, Chicago, IL}

\footnotetext[2]
{Department of Mathematical Sciences, Rensselaer Polytechnic Institute, 110
8th Street, Troy, NY 12180
Email: klobuj@rpi.edu}

\section{Introduction}
\label{sec:introduction}

Neural networks have recently enjoyed an acceleration in popularity, with
new research adding to several decades of foundational work. From multilayer
perceptron (MLP) networks to the more prominent recurrent neural networks
(RNNs) and convolutional neural networks (CNNs), neural networks have become
a dominant force in the fields of computer vision, speech recognition, and
machine translation \cite{rosenblatt1961principles}. Increase in computational
speed and data collection have legitimized the training of increasingly complex
deep networks. The flow of information from input to output is typically
performed in a strictly sequential feed-forward fashion, in which for a network
consisting of   $L$ layers,  nodes in the $i$th layer receive
input from the $(i-1)$st layer, compute an output for each neuron through
an activation function, and in turn use this output as an input for the $(i+1)$st
 layer. A natural extension to this network structure is the addition of
``skip connections" between layers.  Specifically, we are interested in the
class of architectures in which the network of connections form a directed
acyclic graph (DAG).  The defining property of a DAG is that it can always
be decomposed into a \textit{topological ordering} of $L$ layers, in which
nodes in layer $i$ may be connected to layer $j$, where $j>i$.  A skip connection
is a connection between nodes in layers $i$ and $j$, with $j>i+1$.
There has been an increasing interest in studying networks with skip connections
which skip a small number of layers, with examples including Deep Residual
Networks (ResNet) \cite{he2016deep},  Highway Networks \cite{srivastava2015training},
and FractalNets \cite{larsson2016fractalnet}.  ResNets, for instance,  use
``shortcut connections" in which a copy of previous layers is mapped through
an identity mapping to future layers. Kothari and Agyepong \cite{kothari1996lateral}
introduced  ``lateral connections" in the form of a chain, with each unit
in a hidden layer connected to the next. 
The full generality of neural networks for DAG architectures was considered
in \cite{huang2016densely}, which demonstrated superior performance of neural
networks, entitled DenseNets, under a wide variety of skip connections.

As an example of the efficacy of DAG architectures considered in \cite{huang2016densely},
we consider AutoEncoders, a class of neural networks which provide a means
of data compression. For an AutoEncoder, input data, such as a pixelated
image, is also the desired output for a neural network.  During an encoding
phase, input is compressed through several hidden layers before arriving
at a middle hidden layer, called the code, having dimension smaller than
the input.  The next phase is decoding, in which input from the code is fed
through several more hidden layers until arriving at the output, which is
of the same dimension as the input.  The goal of compression is to minimize
the difference between input data and output.  In \cite{agarwal2018crossencoder},
Agarwal et. al introduced CrossEncoders and demonstrated its superior performance
against AutoEncoders with no skip-connections. In Section \ref{sec:experiment},
we extend the previous results to include the MNIST and Olivetti faces public
datasets.  We validate our results against several commonly used compression
based performance metrics.

Our main theoretical result is the convergence of backpropagation with DAG
architectures using gradient descent with momentum.  It is well known that
feed-forward architectures converge under backpropagation, which is  essentially
gradient descent applied to an error function (see \cite{bishop1995neural},
for instance). Updates for weights in backpropagation may be generalized
to include a momentum term, which can help with increasing the convergence
rate \cite{rumelhart1987parallel}. Momentum can help with escaping local
minima, but concerns of overshooting require careful arguments for establishing
convergence. Formal arguments for convergence have so far been restricted
to simple classes of neural networks. Bhaya \cite{bhaya2004steepest} and
Torii \cite{torii2002stability} studied the convergence with backpropagation
using momentum under a linear activation function. Zhang et al.\cite{zhang2006}
 generalized convergence for a class of common nonlinear activation functions,
including sigmoids, for the case of a zero hidden layer networks. Wu et al.
\cite{wu2008convergence} further generalized to one layer by demonstrating
that error is monotonically decreasing under backpropagation iterations for
sufficiently small momentum terms. The addition of a hidden layer required
 \cite{wu2008convergence} to make the additional assumption of bounded weights
during the iteration procedure. 

It is not evident whether applying the methods of \cite{wu2008convergence}
would generalize to networks with several hidden layers and skip connections,
or if they  would require stronger assumptions on boundedness of weights
or the class of activation functions.  We show in Section \ref{appsect} 
that convergence indeed does hold, with similar assumptions to the proof
of convergence of one hidden layer. In Theorem \ref{bigtheo}, we  give the
key inequality for proving Theorem \ref{maint2}, a recursive form for increments
of error and output values of hidden layers after each iteration.  This estimate
allows us to show that for sufficiently small momentum parameters (including
the case of zero momentum), error decreases with each iteration. Our approach
to convergence is somewhat more explicit than the traditional proof of gradient
descent, which minimizes a loss function without considering network architecture.

\section{Architecture for a feed-forward network with cross-layer connectivity}
\label{sec:architecture}

In this section, we formally explain  DAG architectures, and the associated
backpropagation algorithm with momentum.  We then state a theorem for the
convergence of error through backpropagation, whose proof is presented in
Section \ref{appsect}.

\subsection{DAG architecture and backpropagation}
We now present the architecture for neural networks on DAGs. Nodes of a 
DAG can always be ordered into layers $0, \dots, L$, in which connections
(or directed edges) point to layers labeled with higher indices. We will
consider $J$ input values $x^p \in \mathbb R^{l_0}$, $ p = 1, \dots, J$.
 For each layer $i = 0, \dots, L$, there are $l_i$ nodes, with layer 0 denoting
the input. Under this ordering, define $v_{(i,j)}^{l,m}$ as the weight between
node $l$ in layer $i$ and node $m$ in layer $j$, where $i<j$. Let $v_{(i,j)}$
denote the matrix of weights from layer $i$ to $j$. Over all nodes, we use
a single (possibly nonlinear) activation function $g:\mathbb R\rightarrow
\mathbb R$  for the determination of output values. 

The explicit output values of the $l_j$ nodes in layer $j$ are denoted as
\begin{equation}
H_j = (H_j^1, \dots, H_j^{l_j}), \quad 0 \le j \le L.
\end{equation}
These are defined recursively from forward propagation, where the $j$th layer
receives input from all layers $H_i$ with $i<j$. Explicitly,
\begin{align}
&H_0 = x,  &&H_1 = g\left(H_0 v_{(0,1)} \right),\\
&H_j = g\left(\sum_{i < j} H_iv_{(i,j)} \right),  &&H_L = y = g\left(\sum_{i
<L } H_iv_{(i,L)} \right).\label{nodeeqn}
\end{align}  
 Note that here and in the future, for
a real valued function $f$, and a vector $v= (v_1, \dots, v_n)$, we will
use
the notation $f(v) = (f(v_1), \dots, f(v_n))$. Node inputs are defined as

 \begin{equation}
 S_j= \sum_{i < j} H_iv_{(i,j)}.
 \end{equation}
 
We seek to minimize the difference between a set of  $J$ desired outputs
$d^1, \dots, d^J \in \mathbb{R}^{l_L}$, and the corresponding network outputs
 $y^1, \dots, y^J \in \mathbb{R}^{l_L}$. We measure the distances between
desired and network outputs with the total quadratic error
\begin{equation}
E = \sum_{p = 1}^J \|d^p-y^p\|^2/2. \label{generrordef}
\end{equation}
The norm $\|b \|^2 = \sum b_i^2 $ denotes the usual Euclidean norm for a
vector $b = (b_1, \dots, b_n)$. Gradients of the error with respect to weights
are then defined as
\begin{equation}\label{defq}
\frac{\partial E}{\partial v_{(i,j)}^{l,m}} =  q_{(i,j)}^{l,m}.
\end{equation}

The iteration of weights by backpropagation is done through gradient descent
with momentum. Here and in the future, a superscript $k$ is used as an iteration
variable, and $\Delta X^k = X^k - X^{k-1}$ for any quantity $X$.   Weights
are updated as

\begin{equation}\label{itdef}
\Delta v^{m,l;k+1}_{(i,j)}= \tau^{m,l;k}_{(i,j)}\Delta v^{m,l;k}_{(i,j)}
- \eta q^{m,l;k}_{(i,j)}.
\end{equation}
The second term in (\ref{itdef}) corresponds to traditional backpropagation
through gradient descent, while the first term, for a predetermined $\tau
\in (0,1)$, is the contribution from adaptive momentum, with
\begin{equation}\label{taudef}
\tau^{m,l;k}_{(i,j)} = \begin{cases}\frac{\tau\|q^{m,l;k}_{(i,j)}\|}{\|\Delta
v^{m,l;k}_{(i,j)}\|} &  \|\Delta v^{m;k}_{(i,j)}\| \neq 0,\\
0  & \mathrm{otherwise}. \\
\end{cases}
\end{equation}
When the norm acts on a matrix $A = (a_{i,j})_{n\times m}$, it is treated
as the Frobenius norm, with $\|A\|^2 = \sum_{i,j} a_{i,j}^2$.  For clarity,
we sometimes place a variable denoting iteration after a semicolon to distinguish
it from node indices.  We note that a similar choice of momentum was also
used in \cite{zhang2006}.

\subsection{Convergence of backpropagation}

Our major theorem is a statement of convergence under backpropagation with
momentum.   Specifically, for some input $x^j \in l_0$, we will use a generic
desired output of $d^j \in \mathbb{R}$. We use a 1D output for clarity in
exposition.  The proof of convergence for output in multiple dimensions is
essentially the same as the one presented here. The error in this case is
then
\begin{equation}
E = \sum_{p = 1}^J \frac{(d^p-y^p)^2}2: =  \sum_{p = 1}^J \phi_p(S_L^p).
\label{edef}
\end{equation}

We will need some regularity and boundedness assumptions.  These assumptions
are similar to those used in \cite{wu2008convergence}, and may also be found
in other nonlinear optimization problems such as \cite{gori1996}. 
\begin{hyp} \label{assumes}
\hspace{0.5em}
\begin{enumerate} 
\item The function $g$, and its first two derivatives $g'$ and $g''$, are
bounded in $\mathbb R$.
\item The weights $v_{(i,j)}^k$ are uniformly bounded over layers $0 \le
i<j \le L$ and iterations $k = 1, 2, \dots.$
\item The gradient $\nabla E$ vanishes only at a finite set of points.
\end{enumerate}
\end{hyp}

It readily follows from these two assumptions that we may also uniformly
bound  $q_{(i,j)}^k, H_i^k,\phi_p, \phi_p',$ and  $\phi_p''$.  

The purpose of Assumption 3 is to establish convergence with the following
Lemma (see  \cite{sun2006}):
\begin{lemma}\label{canned}
Let $f \in C^1( \mathbb R^n, \mathbb R)$, and suppose that  $\nabla f$ vanishes
at a finite set of points.  Then, for a sequence $\{x_k\}$, if $\|\Delta x^k\|
\rightarrow 0$ and $\|\nabla f(x^k)\|\rightarrow 0$, then for some $x^* \in
\mathbb R^n$,  $x^k \rightarrow x^*$ and $\nabla f(x^*) = 0$.
\end{lemma}

\begin{theorem}\label{maint2}
Under assumptions (1) and (2), for any $s \in [0,1)$ and  $\tau = s\eta$,
there exists $C>0$ such that if 
\begin{equation}
\eta < \frac{1-s}{C(s^2+1)},
\end{equation}
then for $k = 1,2,\dots$, 
\begin{align}
&E^k = E(v_{(i,j)}^k ) \rightarrow E^* \quad 1\le i<j\le L,\\
&q^k_{(i,j)} \rightarrow 0.
\end{align}

If part (3) of the Assumptions is satisfied, weights $v_{(i,j)}^k\rightarrow
v_{(i,j)}^*$, and $E^* = E(v_{(i,j)}^*)$ is a stationary point $(\nabla E
= 0)$.
\end{theorem}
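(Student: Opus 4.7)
The plan is to adapt the one-hidden-layer argument of Wu et al.\ to the DAG setting by showing that $E^k$ decreases strictly whenever $q^k \neq 0$, with decrement controlled by a positive multiple of $\|q^k\|^2$. The main device is a second-order Taylor expansion of $E$ around $v^{k}$,
\[
E^{k+1} - E^k = \sum_{(i,j),(m,l)} q^{m,l;k}_{(i,j)}\,\Delta v^{m,l;k+1}_{(i,j)} + R^k,
\]
where $R^k$ is a quadratic remainder in the weight increments $\Delta v^{k+1}$. Substituting the update rule (\ref{itdef}) together with the momentum coefficient (\ref{taudef}) and applying the elementary inequality $q \cdot (\Delta v/|\Delta v|) \le |q|$ entrywise, together with $\tau = s\eta$, I would obtain
\[
\sum_{(i,j),(m,l)} q^{m,l;k}_{(i,j)}\,\Delta v^{m,l;k+1}_{(i,j)} \le -(1-s)\eta\,\|q^k\|^2.
\]

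The triangle inequality applied to (\ref{itdef}) gives $\|\Delta v^{k+1}\| \le (s+1)\eta\,\|q^k\|$, so provided I can establish a uniform estimate of the form $|R^k| \le C\|\Delta v^{k+1}\|^2$, I would conclude
\[
E^{k+1} - E^k \le \eta\|q^k\|^2\bigl[-(1-s) + C(s+1)^2\eta\bigr],
\]
and the condition $\eta < (1-s)/[C(s^2+1)]$ makes the bracket negative, after absorbing the harmless factor relating $(s+1)^2$ and $s^2+1$ into $C$. This second-derivative bound on $R^k$ is where the main difficulty lies. Each mixed partial $\partial^2 E/\partial v^{m,l}_{(i,j)}\partial v^{m',l'}_{(i',j')}$ is obtained by differentiating the forward recursion (\ref{nodeeqn}) twice, and the DAG structure forces the resulting chain of $g'$, $g''$ factors and weight products to range over every directed path in the skip-connection graph that touches both weights. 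I would organize this combinatorics by induction on the topological ordering, using the tensor identities alluded to in the introduction, and invoke the uniform bounds on $g,g',g''$, on the hidden weights, and on the node outputs $H_i^k$ that follow from Assumption~\ref{assumes} to collapse the estimate to a single constant $C$ independent of $k$.

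Once the monotone decrease inequality $E^{k+1}-E^k \le -c\|q^k\|^2$ is established, the first two conclusions follow immediately: $E^k$ is monotone and bounded below by $0$, hence $E^k \to E^*$, while telescoping yields $\sum_k \|q^k\|^2 < \infty$ and in particular $q^k \to 0$. For the last statement, the bound $\|\Delta v^{k+1}\| \le (s+1)\eta\,\|q^k\|$ gives $\Delta v^k \to 0$, and by Assumption (2) the sequence $\{v^k\}$ lies in a compact set on the non-input layers, so its accumulation points are all zeros of $\nabla E$ by continuity of $q$. Assumption (3) reduces the accumulation set to a finite collection of isolated stationary points, and combined with $\Delta v^k\to 0$ a standard argument (the sequence cannot jump between isolated points once the increments are arbitrarily small) forces $v^k\to v^*$ for a single $v^*$. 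Monotone decrease of $E^k$ then identifies $v^*$ as a local minimum and pins down $E^* = E(v^*)$.
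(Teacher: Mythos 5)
Your proposal is correct and follows essentially the same route as the paper: the first-order term you isolate is exactly the paper's $Q^k(E^k)$ and your bound $-(1-s)\eta\|q^k\|^2$ is its inequality (\ref{midest}); your remainder estimate $|R^k|\le C\|\Delta v^{k+1}\|^2$ is the content of Theorem \ref{bigtheo} combined with Lemma \ref{simplem}, which the paper likewise obtains by induction along the topological ordering (it tracks the increments $\Delta H_n^{k+1}$ and $Q^k(H_n^k)$ layer by layer rather than writing out the Hessian of $E$, but this is the same computation packaged differently); and the endgame via monotonicity, telescoping, and the finiteness of the stationary set is the paper's appeal to Lemma \ref{canned}. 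The only point worth flagging is that your compactness remark should note, as the paper does, that the input-layer weights $v_{(0,j)}$ are not assumed bounded; this is harmless because $\partial H_0/\partial v\equiv 0$ keeps those weights out of every chain-rule product, so the derivative and Hessian bounds still close.
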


\begin{remark}
In the case of $s = 0$, Theorem \ref{maint2} is a statement convergence for
backpropagation without momentum. This can be quickly demonstrated through
gradient descent on the error function $E$. The proof for \ref{maint2} differs
from traditional gradient descent by introducing a recursive formula for
$\Delta E^k$ and $\Delta H_n^k$ given in Theorem \ref{bigtheo} which uses
the intrinsic structure of the network.
\end{remark}

The constant $C$ used is solely dependent on fixed parameters form the network,
and the uniform bounds from the Assumptions. A complete proof for Theorem
\ref{maint2} is provided in Section \ref{appsect}.

\section{Experiments} \label{sec:experiment}

\begin{figure}
\centering
\includegraphics[width=.8\textwidth]{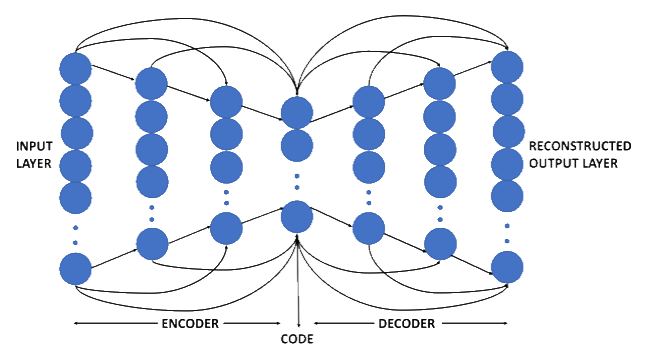}
\caption{\textbf{Architecture for CrossEncoders.}A directed edge from a node
(circle) in one layer (column of circles) to another node in a different
layer represents a connection. Additional edges  between nodes, suppressed
for presentation, may also exist. Note, however, that edges may not connect
encoding and decoding layers. }\label{autofig}
\end{figure}

In this section, we give examples of the efficacy for both DAG architectures
and the addition of momentum to backpropagation with the example of AutoEncoders,
a framework of data compression. The addition of skip connections in AutoEncoders,
entitled CrossEncoders, was studied by Agarwal et al. \cite{agarwal2018crossencoder}.
 We will apply CrossEncoders to the MNIST and Olivetti face dataset\footnote{Both
of these datasets are public, and may  may be obtained from https://www.cl.cam.ac.uk/research/dtg/attarchive/facedatabase.html
(Olivetti) and http://yann.lecun.com/exdb/mnist (MNIST)} . 

For the problem of compression, we require a code layer with index $0<\mathbf
c<L$ and dimension $l_\mathbf c < l_0$. Since we are now comparing input
and output, layer $L$ also contains $l_0$ nodes. For the error defined in
(\ref{generrordef}), we set $d^p = x^p$, and thus
\begin{equation}
E = \sum_{p = 1}^J \|x^p-y^p\|^2/2. 
\end{equation}
Since decoding should be solely dependent  from the code layer, we also require
that skip connections cannot occur between encoding layers and decoding layers.
 Thus
 \begin{equation}
 v_{(i,j)}^{l,m} = 0 \quad \hbox{if }  \quad i<\mathbf c < j. \label{encoderequire}
 \end{equation}
 See Fig. \ref{autofig} for a visual representation of the CrossEncoder architecture.

\begin{figure}
    \centering
    \includegraphics[width=.8\textwidth]{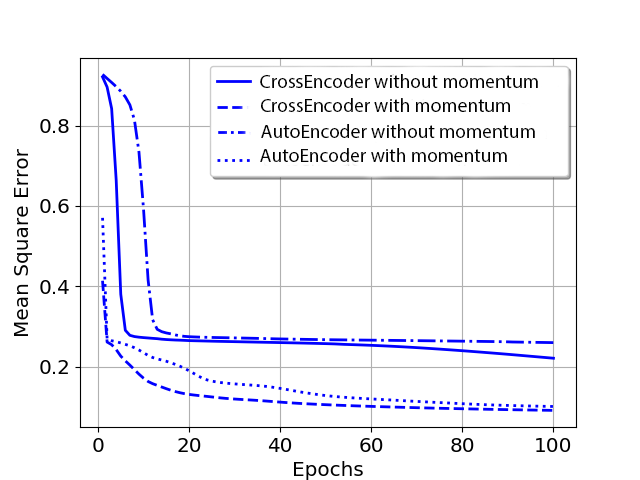}
    \caption{Effect of adding momentum to an optimizer}
    \label{fig:momentum_analysis}
\end{figure}

\subsection{Momentum analysis and performance}
\label{expt:momentum}
\noindent 
To study the effect of backpropogation with momentum, we use the standard
MNIST \cite{lecun1998gradient} dataset of handwritten digits. Each sample
 is  a $28 \times 28$ binary pixel image, transformed to a $1\times 784$ vector.
The  complete dataset consists of $70,000$ images, divided into $60,000$
training and $10,000$ testing images. We train a $784 (L_0) - 64 (L_1) -
64(L_2) - code (L_3) - 64 (L_4) - 64 (L_5) - 784 (L_6)$ network using four
different cases by considering architectures with and without skip connections,
and also backpropagation with zero and positive momentum (set to 0.95). For
the CrossEncoder, each node in layer $L_1$ is connected to all nodes in $L_3$
and each node in layer $L_4$ is connected to all nodes in $L_6$. In Fig.
\ref{fig:momentum_analysis}, we show the training loss curve for all the
four cases by plotting epochs against mean squared error. We find that for
momentum term improves the speed of convergence for architectures with and
without skip connections. Furthermore, the addition of skip connections leads
to faster convergence for both zero and positive momentum backpropagation.

\subsection{DAG architecture and performance}
The Olivetti faces dataset \cite{minear2004lifespan} is comprised of a set of
400 gray-scale face images consisting of ten different images of 40 distinct
subjects. Images for some subjects were taken with varying lighting, facial
expressions (e.g. open / closed eyes, smiling / not smiling), and facial
details (e.g. glasses / no glasses). 
The images are $64\times64$ in size and are quantized to 8-bit [0-255] scale.
A $4096 (L_0) - 500 (L_1) - 500 (L_2) - code (L_3) - 500 (L_4) - 500 (L_5)
- 4096 (L_6)$ MLP network was used for training the face dataset. Like the
previous example, each node in layer $L_1$ is connected to all nodes in $L_3$
and each node in layer $L_4$ is connected to all nodes in $L_6$.

The original $64\times64$ images were transformed to a $1\times4096$ vector.
For training, 350 images were used, and 50 images were used for the testing
dataset. Both AutoEncoders and CrossEncoders were trained for 300 epochs
using SGD optimizer with a learning rate set to 0.001 and momentum of 0.95.
For the given task, we used several lower dimension representations, such
as $1\times600$, $1\times300$, and $1\times30$, respectively. Table \ref{table8}
illustrates the performance of the respective networks for different code
size using peak signal to noise ratio (PSNR), structural similarity index
(SSIM), and normalized root mean squared error (NRMSE) metrics. In Table
\ref{table8}, we observe improved performance for CrossEncoders across all
performance metrics.

\begin{table}[ht]
\caption{PSNR, SSIM, and NRMSE values of CrossEncoder and Autoencoder between
reconstructed and the original images for Olivetti face dataset.  Higher
PSNR and SSIM values, and lower NRSME values, imply more accurate results.}
\begin{center}
\noindent\begin{tabular}{|c|c|c|c|c|c|c|}
\toprule
& \multicolumn{3}{c|}{\textbf{CrossEncoder}} & \multicolumn{3}{c|}{\textbf{Autoencoder}}
\\
\cmidrule(r){2-4}\cmidrule(l){5-7}
\textbf{Code} & \textbf{PSNR} & \textbf{SSIM} & \textbf{NRMSE}  & \textbf{PSNR}
& \textbf{SSIM} & \textbf{NRMSE} \\
\midrule
$1\times600$  & \textbf{79.4679} & \textbf{0.9040} &  \textbf{0.1464} & 75.8858
& 0.8554 &  0.2217\\ [0.5ex]
$1\times300$  & \textbf{79.4551} & \textbf{0.9046} &  \textbf{0.1467} & 75.8919
& 0.8555 &  0.2215\\ [0.5ex]
$1\times30$  & \textbf{77.8398} & \textbf{0.8791} & \textbf{0.1764} & 75.9066
& 0.8560 &  0.2211\\ [0.5ex]
\bottomrule
\end{tabular}
\end{center}
\label{table8}
\vspace{-1em}
\end{table}

\section{Proof of Convergence}\label{appsect}

\subsection{Notation and conventions}

In what follows, we will need some notation for  matrix and tensor manipulation.
First, we recall the entrywise, or Hadamard, product,  which for two matrices
$A = (a_{i,j})_{n\times m}, B = (b_{i,j})_{n\times
m}$, is defined as  $(A \circ B)_{i,j} = a_{i,j}b_{i,j}$.
By taking the sum of all entries of a Hadamard product, we obtain the Frobenius
inner product
$A:B = \sum_{i,j} (A\circ B)_{i,j}$. Also, a matrix gradient
of a real (vector) valued function is matrix (tensor) valued, with an element-wise
representation as  $\frac{\partial f }{\partial v_{(i,j)}^k} = \left(\frac{\partial
f }{\partial v_{(i,j)}^{a,b;k}}\right)_{l_i\times l_j}$ and $\frac{\partial
H_m^k }{\partial v_{(i,j)}^k}= \left(\frac{\partial H_m^{c;k} }{\partial
v_{(i,j)}^{a,b;k}}\right)_{l_i\times l_j\times l_m}$.  

In all future estimates, we look at backpropagation over a single input,
meaning $J = 1$. This allows us to suppress the variable $p$, which is essentially
done for the sake of presentation.  The proofs of Theorem \ref{bigtheo} and
Lemma \ref{simplem} generalize immediately to the case of multiple inputs
by taking sums over all inputs. Finally, in our estimates, we will use the
constant $C>0$ which depends solely on fixed parameters in the network, such
as the input value $x$, uniform bounds of node outputs $H_j$ and inputs $S_j$,
and for generalizing to multiple inputs, the size of the dataset $J$. The
constant $C$ is used in multiple estimates, and may increase each time it
appears.

 \subsection{Estimates on node output increments}

 Our  major technical theorem shows that the increments of outputs $\Delta
H_n^{k+1}$   are
similar, up to first order, to $Q^k(H^k)$, where  $Q^k$ denotes the differential
operator
\begin{equation}\label{qeqn}
Q^k =\sum_{i<j\le L}  \Delta v_{(i,j)}^{k+1}:\frac{\partial }{\partial v_{(i,j)}^{k}}.
\end{equation}  
Note that when $Q^k$ acts on a  length $l_m$ vector, the matrix inner product
in (\ref{qeqn})
is between a $l_i \times l_j$-sized matrix and a $l_i\times l_j \times l_m$-sized
tensor, and is a vector of size $l_m$.

The major utility of introducing $Q^k$ is that it provides a simple bound
when acting on $E^k$.  Specifically, using (\ref{defq}) , (\ref{itdef}),
and (\ref{taudef}), it is straightforward to show
\begin{equation}
Q^k(E^k)\le (-\eta+\tau)\sum_{i<j \le L}\|q_{(i,j)} ^k\|^2. \label{midest}
\end{equation}

\begin{theorem}\label{bigtheo}

There exists a universal constant $C>0$ such that
\begin{equation}
|Q^k(E^k)-\Delta E^{k+1}| \le C\left(\sum_{n\le L}\|\Delta H_n^{k+1}\|^2
+ \sum_{m<n\le
L}\|\Delta v_{(m,n)}^{k+1}\|^2\right). \label{th1bnd}
\end{equation}
\end{theorem}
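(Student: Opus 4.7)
The plan is a layered Taylor expansion: since $E$ depends on the weights only through the network outputs, I would first reduce $\Delta E^{k+1}$ to an increment of the output $H_L$, then compare $\Delta H_j^{k+1}$ to $Q^k(H_j^k)$ layer by layer via the forward-propagation recursion \eqref{nodeeqn}. The tensor identities that arise from expanding $S_j=\sum_{i<j}H_i v_{(i,j)}$ simultaneously for $\Delta S_j^{k+1}$ and for $Q^k(S_j^k)$ should let me peel off the first-order-in-weight-increment terms, which cancel, leaving only quadratic remainders of the shape on the right-hand side of \eqref{th1bnd}.

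Concretely, since $E=g_L(H_L)$, the chain rule gives $Q^k(E^k)=g_L'(y^k)\,Q^k(H_L^k)$, while a scalar Taylor expansion (using the uniform bound on $g_L''$ from Assumption~\ref{assumes}) gives $\Delta E^{k+1} = g_L'(y^k)\,\Delta H_L^{k+1} + O(|\Delta H_L^{k+1}|^2)$. Combining yields
\begin{equation*}
\bigl|\Delta E^{k+1} - Q^k(E^k)\bigr| \le C\,\bigl|\Delta H_L^{k+1} - Q^k(H_L^k)\bigr| + C\,|\Delta H_L^{k+1}|^2,
\end{equation*}
so it suffices to control $\|\Delta H_j^{k+1} - Q^k(H_j^k)\|$ for $j\le L$. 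The same Taylor argument applied to $H_j=g(S_j)$ reduces this further to $\|\Delta S_j^{k+1} - Q^k(S_j^k)\|$ plus a quadratic $|\Delta S_j^{k+1}|^2$ remainder.

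For the $S_j$ level, direct expansion yields
\begin{equation*}
\Delta S_j^{k+1} = \sum_{i<j}\bigl(\Delta H_i^{k+1}\,v_{(i,j)}^{k+1} + H_i^k\,\Delta v_{(i,j)}^{k+1}\bigr),
\end{equation*}
\begin{equation*}
Q^k(S_j^k) = \sum_{i<j} Q^k(H_i^k)\,v_{(i,j)}^k \;+\; \sum_{m<j} H_m^k\,\Delta v_{(m,j)}^{k+1},
\end{equation*}
and the direct $H_m^k\,\Delta v_{(m,j)}^{k+1}$ pieces cancel on subtraction. Writing
\begin{equation*}
\Delta H_i^{k+1}\,v_{(i,j)}^{k+1} - Q^k(H_i^k)\,v_{(i,j)}^k = \bigl[\Delta H_i^{k+1} - Q^k(H_i^k)\bigr]\,v_{(i,j)}^{k+1} + Q^k(H_i^k)\,\Delta v_{(i,j)}^{k+1}
\end{equation*}
then separates an inductive error at earlier layers from a purely quadratic cross term, since $Q^k(H_i^k)$ is itself linear in the weight increments with tensor coefficients uniformly bounded by Assumption~\ref{assumes}. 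Induction on the layer index $j$ produces a bound of the required quadratic form, and evaluating at $j=L$ in the earlier display gives \eqref{th1bnd}.

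The main obstacle I expect is the tensor bookkeeping at the $Q^k(S_j^k)$ step. Because the DAG allows any skip weight $v_{(m,n)}$ with $n\le j$ to feed into $S_j$ both directly (when $n=j$) and indirectly through every intermediate $H_i$, the cancellation of first-order pieces has to be tracked carefully index by index; in a strictly feed-forward network this would be the usual chain-rule identity, but here the sums over skip paths have to be handled as tensor contractions against $\partial H_i^k/\partial v_{(m,n)}^k$. Uniform bounds on weights, node outputs, $S$-values, and the first two derivatives of $g$---all granted by Assumption~\ref{assumes}---are precisely what let the constant $C$ be taken independent of $k$.
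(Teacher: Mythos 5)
Your proposal is correct and follows essentially the same route as the paper: both derive parallel layer-indexed recurrences for $Q^k(S_j^k)$ (via the product rule / tensor contraction, which is the paper's $A$- and $B$-term computation) and for $\Delta S_j^{k+1}$ (via Taylor expansion with the bounded $g''$), observe that the first-order $H_m^k\,\Delta v_{(m,j)}^{k+1}$ terms cancel, and induct down to the input layer where $\Delta H_0^{k+1}=Q^k(H_0^k)=0$. The only cosmetic difference is how the quadratic cross term is organized ($Q^k(H_i^k)\,\Delta v_{(i,j)}^{k+1}$ in your version versus $\Delta H_m^{k+1}\,\Delta v_{(m,n)}^{k+1}$ in the paper), and both are bounded by the same quadratic right-hand side.
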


\begin{proof} We show (\ref{th1bnd}) follows through three steps: (1) finding
a recurrence relation, with respect to the ordering of hidden layers, for
$Q^k(H^k_n)$ and $Q^k(E^k)$; (2) finding a similar relation for $\Delta H_n^{k+1}$
and $\Delta E^{k+1}$; and (3) comparing the two relations.
 
 \textit{(1) (A recurrence for $Q^k(H_n^k)$ and $Q^k(E^k)$).} Applying the
chain rule to the total error (\ref{edef}), using (\ref{nodeeqn}),
 and rearranging sums, \begin{align}\label{crule1}
Q^k(E^{k}) = \phi'(S_L^k)\sum_{i<j\le L} \Delta v_{(i,j)}^{k+1}:
\frac{\partial }{\partial v_{(i,j)}^k}\left(\sum_{m<L}H_m^{k}  v_{(m,L)}^k
 \right)\\  =   \phi'(S_L^k)\sum_{m<L}\sum_{i<j\le L} \Delta v_{(i,j)}^{k+1}:
\frac{\partial }{\partial v_{(i,j)}^k}\left(H_m^{k} v_{(m,L)}^k \right) \label{qeform}.
\end{align}

We now focus on expressing (\ref{qeform}) in a recursive form. We begin with
considering the terms in (\ref{qeform}) with $j = L$. We first work elementwise
by differentiating with respect to  the $(a,b)$ entry of the matrix derivative
for  $\frac{\partial }{\partial v_{(i,j)}^k}\left(H_m^{k}v_{(m,L)}^k \right)$.
From the product rule,  this can be written as a sum of vectors, with
\begin{align} \label{msplit}
&\frac{\partial }{\partial v_{(i,L)}^{a,b;k}}\left( H_m^{k}v_{(m,L)}^k \right)
= \frac{\partial  H_m^{k} }{\partial v_{(i,L)}^{a,b;k}}v_{(m,L)}^k + H_m^{k}
\frac{\partial v_{(m,L)}^k }{\partial v_{(i,L)}^{a,b;k}}\\
  &=  \frac{\partial  H_m^{k} }{\partial v_{(i,L)}^{a,b;k}}v_{(m,L)}^k +
(0, \dots,\underbrace{ \delta_{i,m}H_m^{a;k}}_{\hbox{ $b^{th}$ entry}}, \dots,
0)\\
&:=A_{i,m}^{a,b;k}+B_{i,m}^{a,b;k}.  \label{msplit2}
\end{align}

Each of these terms is handled in turn.  First, summing the Frobenius inner
product of the matrix $\Delta v_{(i,L)}^{k+1}$ and the tensor $A_{i,m}^k$,
we may write
\begin{align}
\sum_{m<L}\sum_{i< L} \Delta v_{(i,L)}^{k+1}:A_{i,m}^k 
  &= \sum_{m<L} \sum_{i<L} \sum_{\substack{a<l_i\\b<l_L}}\Delta v_{(i,L)}^{a,b;k+1}\frac{\partial
 H_m^{k} }{\partial v_{(i,L)}^{a,b;k}}v_{(m,L)}^k\\
&= \sum_{m<L} \sum_{i< L} \left(\Delta v_{(i,L)}^{k+1}:\frac{\partial H_m^k
}{\partial
v_{(i,L)}^k}\right)v^k_{(m,L)}.
 \label{nondiag}
\end{align}

For $B_{i,m}^k$, we also work elementwise, and write the Frobenius inner
product as
\begin{align}
&\sum_{m<L}\sum_{i< L} \Delta v_{(i,L)}^{k+1}:B_{i,m}^k
  = \sum_{m<L} \sum_{\substack{a\le l_m\\b\le l_L}}\Delta v_{(m,L)}^{a,b;k+1}B_{m,m}^{a,b;k}\\
     &=\sum_{m<L}\left(\sum_{a\le l_m}H_m^{a;k}\Delta v_{(m,L)}^{a,1;k+1},
\dots,
\sum_{a\le l_m}H_m^{a;k}\Delta v_{(m,L)}^{a,l_L;k+1}\right)\\&=\sum_{m<L}H_m^k\Delta
v_{(m,L)}^{k+1}.\label{diag}
\end{align}

Calculations for double sum in (\ref{qeform}) for the remaining terms with
$j<L$ are similar to the case $j = L$, except that there is
no corresponding $B_{i,m}^k$ term. Indeed, we can show
\begin{align} 
\sum_{m<L}\sum_{i<j< L} \Delta v_{(i,j)}^{k+1}:
\frac{\partial }{\partial v_{(i,j)}^k}\left(H_m^{k}v_{(m,L)}^k \right) 
 = \sum_{m<L}\sum_{i<j< L}\left(\Delta v_{(i,j)}^{k+1}:\frac{\partial H_m^k}{\partial
v_{(i,j)}^k}\right)v_{(m,L)}^k. \label{lessj} \end{align}
Putting together (\ref{crule1})-(\ref{lessj}), we arrive at
  \begin{align}
 &\sum_{m<L}\sum_{i<j\le L} \Delta v_{(i,j)}^{k+1}:
\frac{\partial }{\partial v_{(i,j)}^k}\left( H_m^{k} v_{(m,L)}^k \right)
\label{bigsumrewrite}\\
&= \sum_{m<L} \left( H_m^k  \Delta v_{(m,L)}^{k+1} 
+\sum_{i<j\le m}\left(\Delta v_{(i,j)}^{k+1}:\frac{\partial H_m^k}{\partial
v_{(i,j)}^k}\right)v_{(m,L)}^k\right) \label{firstrep}\\
 &= \sum_{m<L}\left(H_m^k  \Delta v_{(m,L)}^{k+1} +Q^k(H^k_m)  v^k_{(m,L)}
 \right).
\end{align}
Note that (\ref{firstrep}) uses the fact that since $H_m^k$ only depends
on layers $1$ through $m-1$, we may truncate the sum of $Q^k$ and write\begin{equation}
Q^k(H_m^k) =\sum_{i<j\le m}  \Delta v_{(i,j)}^{k+1}:\frac{\partial H_m^k
}{\partial v_{(i,j)}^k}.
\end{equation}
We may now substitute (\ref{bigsumrewrite}) into (\ref{qeform}) to yield
the recursive formula
\begin{align}
Q^k(E^k)=  \phi'\left(S_L^k \right)\sum_{m<L}\left( H_m^k\Delta v_{(m,L)}^{k+1}+Q^k(H^k_m)v^k_{(m,L)}
  \right).
\end{align}
From similar calculations, the formula over a node $H^k_n$, with $n<L$, is
\begin{align}
Q^k(H^k_n)= g'\left(S_n^k\right)\circ\sum_{m<n}\left( H_m^k\Delta v_{(m,n)}^{k+1}+Q^k(H^k_m)v^k_{(m,n)}
  \right).\label{qrec}
\end{align}

\textit{(2) (A recurrence for $\Delta H_n^{k+1}$ and $\Delta E^{k+1}$).}
A recursive formula for   $\Delta H^{k+1}_n$ is found through a Taylor expansion
of $E(S_L^{k+1})$ centered at $S_L^k$. Specifically, there exists $t_{k}$
between $S_L^k$
and $S_L^{k+1}$ with
\begin{align}
\Delta E^{k+1} &=  \phi'\left(S_L^{k} \right)\Big(\sum_{m < L} \Delta( H_m^{k+1}v_{(m,L)}^{k+1}
) \Big)+\frac 12 \phi''(S_L^k)\left(\sum_{m < L}\Delta(  H_m^{k+1}v_{(m,L)}^{k+1})\right)^2\\
 &=  \phi'\left(S_L^{k} \right)\sum_{m < L} \left( \Delta H_m^{k+1}
v_{(m,L)}^k +H^k_m\Delta v_{(m,L)}^{k+1}
+ \Delta H_m^{k+1}\Delta v_{(m,L)}^{k+1} \right)
\\&+{\frac 12} \phi''(t_{k})\left(\sum_{m < L}\Delta( H_m^{k+1} v_{(m,L)}^{k+1})\right)^2.
\end{align} 
Similarly, {there exist $t_{n,k} = (t_{n,k}^{1}, \dots, t_{n,k}^{l_l})$ where
each
$t_{n,k}^r$ lies between $S_n^{r;k}$ and $S_n^{r;k+1}$ for $r = 1, \dots
l_n$} and
\begin{align}
\Delta H_n^{k+1} = &g'\left(S_n^k \right)\circ\sum_{m < n}\left (\Delta H_m^{k+1}v_{(m,n)}^k
 +H^k_m\Delta v_{(m,n)}^{k+1}
+\Delta H_m^{k+1}\Delta v_{(m,n)}^{k+1}  \right) \label{diffrec0}
\\+&{\frac 12g''(t_{n,k})\circ\left(\sum_{m < n}\Delta (H_m^{k+1} v_{(m,n)}^{k+1})
\right)^2}\label{diffrec}. 
\end{align}

\textit{(3) (Comparing recurrences).}
From (1) and (2) of Assumptions \ref{assumes}, we may derive the simple bound

\begin{equation}
\|\Delta H_m^{k+1}\Delta v_{(m,n)}^{k+1}\| \le C( \|\Delta v_{(m,n)}^{k+1}\|^2+\|\Delta
H_m^{k+1}\|^2)
\end{equation}
for some constant $C>0$. 
Taking differences of (\ref{diffrec}) and (\ref{qrec}),
for any $n<L$,  we then obtain the recurrence inequality
\begin{align}
&\|Q^k(H_n^{k})-\Delta H_n^{k+1} \|\le C\left(\sum_{m < n}\|Q^k(
H_m^{k})-\Delta H_m^{k+1} \|  \right)  \label{rec1}\\&+ C\sum_{m<n} ( \|\Delta
v_{(m,n)}^{k+1}\|^2+\|\Delta
H_m^{k+1}\|^2) \label{mainrec1}.
\end{align}
Replacing $H_n^k$ with $E^k$ in (\ref{rec1}) produces the same type of inequality,
with the sum in (\ref{mainrec1}) now ranging from $m = 1, \dots, L-1$. Repeated
applications of (\ref{mainrec1}) to  $E^k$ and subsequently to $H_n^k$,
for $n = 1, \dots, L-1$, result in   
\begin{align}
&|Q^k(E^k)-\Delta E^{k+1}|\le C\left(\|Q^k( H_0^{k})-\Delta H_0^{k+1} \|
 \right)
\\&{+ C\left(\sum_{n< L}\|\Delta H_n^{k+1}\|^2 + \sum_{m<n< L}\|\Delta v_{(m,n)}^{k+1}\|^2\right)}
\label{mainrec}.
\end{align}
To complete the proof, we note that
 the input data $x$ does not change under iterations, so \begin{equation}
Q^k( H_0^{k})-\Delta H_0^{k+1} \equiv 0.
\end{equation}
\end{proof}

 We now bound the quadratic terms in (\ref{th1bnd}).

\begin{lemma}\label{simplem}
For some constant $C>0$,

\begin{enumerate}
\item
\begin{equation}
 \|\Delta v^{k+1}_{(m,n)}\| \le (\eta+\tau)\|q_{(m,n)}^k\|. \label{fstpart}
\end{equation}
\item 
\begin{equation}
 \|\Delta H^{k+1}_n\| \le C(\eta+\tau)\sum_{i<j<n}\|q_{(i,j)}^k\|. \label{scdpart}
\end{equation}
\end{enumerate}
\end{lemma}

\begin{proof}
We may show (\ref{fstpart}) immediately from (\ref{itdef}) and (\ref{taudef}).
 For (\ref{scdpart}), we use strong induction, assuming the inequality holds
for layers $m < n$ (note that the base case holds trivially for $n = 0$).
From the Taylor expansion used in (\ref{diffrec0})-(\ref{diffrec}), and the
boundedness of $g'$ and  $g''$:
 \begin{align}
\|\Delta H^{k+1}_n\| \le &C \sum_{m < n}\left \|\Delta H_m^{k+1}v_{(m,n)}^k
 +H^k_m\Delta v_{(m,n)}^{k+1}
+\Delta H_m^{k+1}\Delta v_{(m,n)}^{k+1}  \right
 \| \label{linepart}\\ &+C\left \|\left(\sum_{m <
n}\Delta (H_m^{k+1}v_{(m,n)}^{k+1} )\right)^2\right \|.\label{quad}
\end{align}
 From the induction hypothesis, the boundedness of weights and node outputs,
and (\ref{fstpart}), the right hand side of (\ref{linepart}) is bounded by
the right hand side of (\ref{scdpart}) for some $C>0$.  From the boundedness
assumptions,  
\begin{align}
\left \|\left(\sum_{m <n}\Delta (H_m^{k+1}v_{(m,n)}^{k+1} )\right)^2\right \| \le C \left \|\sum_{m<n}\Delta (H_m^{k+1}v_{(m,n)}^{k+1} )\right \|,
\end{align}
which implies that we may use similar estimates for (\ref{quad}) which
we used in (\ref{linepart}) to arrive at (\ref{scdpart}).
\end{proof}
From Theorem \ref{bigtheo}, (\ref{midest}), and Lemma \ref{simplem},
the iteration of error may now be estimated as  
\begin{align}
\Delta E^{k+1} &\le  C\left(\sum_{n\le L}\|\Delta H_n^{k+1}\|^2 + \sum_{m<n\le
L}\|\Delta v_{(m,n)}^{k+1}\|^2\right)+Q^k(E^k)
\\&\le \left(-\eta+\tau+C(\tau^2+\eta^2)\right) \sum_{m<n \le L} \|q_{(m,n)}^k\|^2.
\label{graddif}
\end{align}

\subsubsection{Proof of convergence}

For some  $s \in [0,1)$, assume  $\tau = s\eta $.  It is straightforward
to show that the term in
front of the norms in (\ref{graddif}) is negative when
\begin{align}
\eta < \frac{1-s}{C(s^2+1)}.
\end{align}
Under this constraint, $E^k$ is decreasing under each iteration. The summability
for $\|q_{(i,j)}^k\|^2$ also follows, since
\begin{align}
\sum_{k = 1}^\infty \|q_{(i,j)}^k\|^2 \le\frac 1{\left(\eta-\tau- C(\tau^2+\eta^2)\right)}
\sum_{k = 1}^\infty \Delta E^k <\infty.   
\end{align}
Thus  $\|q_{(i,j)}^k\|\rightarrow 0$ and, from (\ref{fstpart}),  $ \|\Delta
v_{(i,j)}^k\|\rightarrow 0.$
Lemma \ref{canned}
and part (3) of Assumption 2 imply a set of minimum weights $v^*_{1}, v_2^*,
z^*, w^*$,
which determine a  {stationary point} of $E$. This shows Theorem \ref{maint2}.

\section{Conclusion}
We have studied a feed-forward network with skip-layer connections. The possible
directed graph architectures are the class of directed acyclic graphs.  As
shown in \cite{huang2016densely}, introducing skip connections often increases
the performance of a deep neural network. In \cite{agarwal2018crossencoder}
and in Section \ref{sec:experiment}, we have demonstrated increased performance
in the setting of AutoEncoders. For our main result, we have established
the convergence of backpropagation with adaptive momentum of networks with
skip-connections. This generalizes the result of Wu et al. \cite{wu2008convergence}
who established convergence for a feed forward network with one hidden layer.
While we have considered general DAG architectures, it remains to investigate,
both through theory and experiment, the optimality properties with regards
to the number of layers and  skip connections. We hope to address these properties
in future works.

\bibliographystyle{siam}
\bibliography{references}

\end{document}